\documentclass{article}

% if you need to pass options to natbib, use, e.g.:
% \PassOptionsToPackage{numbers, compress}{natbib}
% before loading nips_2018

% ready for submission
% \usepackage{nips_2018}

% to compile a preprint version, e.g., for submission to arXiv, add
% add the [preprint] option:
% \usepackage[preprint]{nips_2018}

% to compile a camera-ready version, add the [final] option, e.g.:
% \usepackage[final]{nips_2018}

% to avoid loading the natbib package, add option nonatbib:
\usepackage[nonatbib,final]{neurips_2018}

\usepackage[utf8]{inputenc} % allow utf-8 input
\usepackage[T1]{fontenc}    % use 8-bit T1 fonts
\usepackage{url}            % simple URL typesetting
\usepackage{booktabs}       % professional-quality tables
\usepackage{nicefrac}       % compact symbols for 1/2, etc.
\usepackage{microtype}      % microtypography

% Added packages
\usepackage{xcolor}
\definecolor{mydarkred}{rgb}{0.6,0,0}
\definecolor{mydarkgreen}{rgb}{0,0.6,0}
\usepackage{booktabs}
\usepackage{amsfonts}
\usepackage{amsmath,amsthm,amsfonts,amssymb}
\usepackage{bm}
\usepackage{comment}
\usepackage{tablefootnote}
\usepackage{multirow}
\usepackage{multicol}
\usepackage{hhline}
\usepackage{graphicx}
\usepackage{subcaption}
\usepackage[labelfont=bf]{caption}
\usepackage[colorlinks,linkcolor=mydarkred,citecolor=mydarkgreen]{hyperref}
\usepackage{wrapfig}

\DeclareMathOperator*{\argmin}{\mathrm{arg\,min}}

\newcommand{\bmx}{\bm{x}}

\newtheorem{theorem}{Theorem}
\newtheorem{lemma}[theorem]{Lemma}

\newcommand{\pr}{\mathrm{Pr}}
\newcommand{\bE}{\mathbb{E}}
\newcommand{\bR}{\mathbb{R}}
\newcommand{\cG}{\mathcal{G}}
\newcommand{\cH}{\mathcal{H}}

\newcommand{\cO}{\mathcal{O}}

\newcommand{\cX}{\mathcal{X}}
\newcommand{\fR}{\mathfrak{R}}

\newcommand{\hR}{\widehat{R}}
\newcommand{\hg}{\hat{g}}

\title{Binary Classification from Positive-Confidence Data}

% The \author macro works with any number of authors. There are two
% commands used to separate the names and addresses of multiple
% authors: \And and \AND.
%
% Using \And between authors leaves it to LaTeX to determine where to
% break the lines. Using \AND forces a line break at that point. So,
% if LaTeX puts 3 of 4 authors names on the first line, and the last
% on the second line, try using \AND instead of \And before the third
% author name.

\author{
  Takashi Ishida$^{\normalfont \text{1,2}}$\quad Gang Niu$^{\normalfont \text{2}}$\quad Masashi Sugiyama$^{\normalfont \text{2,1}}$\\
  $^\text{1}$ The University of Tokyo, Tokyo, Japan\\
  $^\text{2}$ RIKEN, Tokyo, Japan\\
  \texttt{\{ishida@ms., sugi@\}k.u-tokyo.ac.jp, gang.niu@riken.jp}\\
}

\begin{document}
% \nipsfinalcopy is no longer used

\maketitle

\begin{abstract}
Can we learn a binary classifier from only positive data, without any negative data or unlabeled data?
We show that if one can equip positive data with confidence (\emph{positive-confidence}), one can successfully learn a binary classifier, which we name \emph{positive-confidence (Pconf) classification}.
Our work is related to \emph{one-class classification} which is aimed at ``describing'' the positive class by clustering-related methods, but one-class classification does not have the ability to tune hyper-parameters and their aim is not on ``discriminating'' positive and negative classes.
For the Pconf classification problem, we provide a simple empirical risk minimization framework that is model-independent and optimization-independent.
We theoretically establish the consistency and an estimation error bound, and demonstrate the usefulness of the proposed method for training deep neural networks through experiments.
\end{abstract}

\section{Introduction}

Machine learning with big labeled data has been highly successful in applications
such as image recognition, speech recognition, recommendation, and machine translation \cite{Goodfellow-et-al-2016}.
However, in many other real-world problems including robotics, disaster resilience,
medical diagnosis, and bioinformatics, massive labeled data cannot be easily collected typically.
For this reason, machine learning from \emph{weak supervision}
has been actively explored recently,
including \emph{semi-supervised classification} \cite{semi,miyato2016iclr,sakai17icml,yang16icml,kipf17iclr,oliver2018nips},
\emph{one-class classification} \cite{anomaly-survey,one-class-survey,one-class-svm-2,mach:Tax+Duin:2004,ICDM:Hido+etal:2008,AISTATS:Smola+etal:2009},
\emph{positive-unlabeled (PU) classification} \cite{elkan08kdd,natarajan13nips,christo14nips,christo15icml,niu16nips,kiryo2017,sakai2017mlj},
\emph{label-proportion classification} \cite{quadrianto:icml08,ICML:Yu+etal:2013},
\emph{unlabeled-unlabeled classification} \cite{TAAI:duPlessis+etal:2013,ICML:Menon+etal:2015,nan18arxiv},
\emph{complementary-label classification} \cite{ishida2017,yu2017eccv,ishida18arxiv},
and \emph{similar-unlabeled classification} \cite{bao2018icml}.

In this paper, we consider a novel setting
of classification from weak supervision
called \emph{positive-confidence (Pconf) classification},
which is aimed at training a binary classifier only
from positive data equipped with \emph{confidence}, without negative data.
Such a Pconf classification scenario is conceivable
in various real-world problems.
For example, in purchase prediction,
we can easily collect customer data from our own company (positive data),
but not from rival companies (negative data).
Often times, our customers are asked to answer questionnaires/surveys on how strong their buying intention was over rival products.  This may be transformed into a probability between 0 and 1 by pre-processing, and then it can be used as positive-confidence, which is all we need for Pconf classification.

Another example is a common task for app developers, where they need to predict whether app users will continue using the app or unsubscribe in the future.
The critical issue is that depending on the privacy/opt-out policy or data regulation, they need to fully discard the unsubscribed user's data.
Hence, developers will not have access to users who quit using their services, but they can associate a positive-confidence score with each remaining user by, e.g., how actively they use the app.

\begin{figure}
\centering
% \begin{minipage}[c]{0.4\textwidth}
% \end{minipage}\hspace{3mm}%\hfill
% \begin{minipage}[c]{0.50\textwidth}
% \vspace*{-\intextsep}
% \raisebox{-40pt}{
% \vspace{-20pt}
% \raisebox{-0.3\textheight}[0pt][0pt]{
% \raisebox{0pt}[\height-5]{
\includegraphics[bb = 0 0 2094 352, scale = 0.345]{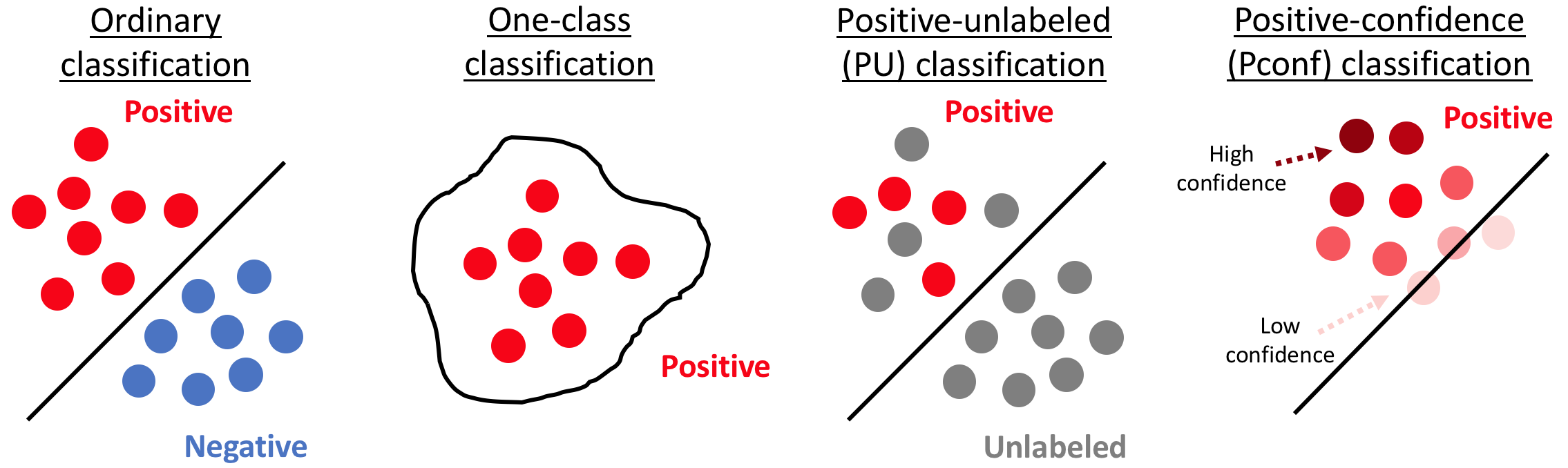}
\caption{Illustrations of the \emph{Pconf classification} and other related classification settings.  Best viewed in color.  Red points are positive data, blue points are negative data, and gray points are unlabeled data.  The dark/light red colors on the rightmost figure show high/low confidence values for positive data.}
\label{fig:intro}
\end{figure}

In these applications, as long as positive-confidence data can be collected,
Pconf classification allows us to obtain a classifier that
discriminates between positive and negative data.

\paragraph{Related works}
Pconf classification is related to one-class classification,
which is aimed at ``describing'' the positive class
typically from hard-labeled positive data without confidence.
To the best of our knowledge, previous one-class methods are motivated geometrically \cite{one-class-svm-1,one-class-svm-2}, by information theory \cite{sugiyama2014}, or by density estimation \cite{breunig2000}.
However, due to the descriptive nature of all previous methods,
there is no systematic way to tune hyper-parameters
to ``classify'' positive and negative data.
In the conceptual example in Figure~\ref{fig:intro}, one-class methods (see the second-left illustration) do not have any knowledge of the negative distribution, such that the negative distribution is in the lower right of the positive distribution (see the left-most illustration).
Therefore, even if we have an infinite number of training data, one-class methods will still require regularization to have a tight boundary in all directions, wherever the positive posterior becomes low.  Note that even if we knew that the negative distribution lies in the lower right of the positive distribution, it is still impossible to find the decision boundary, because we still need to know the degree of overlap between the two distributions and the class prior.
One-class methods are designed for and work well for anomaly detection, but have critical limitations if the problem of interest is ``classification''.

On the other hand, Pconf classification is aimed at constructing a discriminative classifier
and thus hyper-parameters can be objectively chosen to discriminate between positive and negative data.
We want to emphasize that the key contribution of our paper is to propose a method that is purely based on empirical risk minimization (ERM) \cite{Vapnik}, which makes it suitable for binary classification.

Pconf classification is also related to positive-unlabeled (PU) classification,
which uses hard-labeled positive data and additional unlabeled data for constructing a binary classifier.
A practical advantage of our Pconf classification method over typical PU classification methods
is that our method does not involve estimation of the \emph{class-prior probability},
which is required in standard PU classification methods \cite{christo14nips,christo15icml,kiryo2017},
but is known to be highly challenging in practice \cite{AISTATS:Scott+Blanchard:2009,JMLR:Blanchard+etal:2010,IEICE:duPlessis+Sugiyama:2014,ICML:Menon+etal:2015,ML:duPlessis+etal:2017}.
This is enabled by the additional confidence information which indirectly includes the information of the class prior probability, bridging class conditionals and class posteriors.

\paragraph{Organization}
In this paper, we propose a simple ERM framework
for Pconf classification and theoretically establish the consistency
and an estimation error bound.
We then provide an example of implementation to Pconf classification
by using linear-in-parameter models (such as Gaussian kernel models),
which can be implemented easily and can be computationally efficient.
Finally, we experimentally demonstrate the practical usefulness of the proposed method for training linear-in-parameter models and deep neural networks.

\section{Problem formulation}
\label{sec:ordinaryclassificaiton}

In this section, we formulate our Pconf classification problem.
Suppose that a pair of $d$-dimensional pattern $\bmx\in\mathbb{R}^d$
and its class label $y\in\{+1, -1\}$
follow an unknown probability distribution with density $p(\bmx,y)$.
Our goal is to train a binary classifier
$g(\bmx):\mathbb{R}^d\rightarrow\mathbb{R}$
so that the classification risk $R(g)$ is minimized:
\begin{align}
  R(g)= \mathbb{E}_{p(\bmx,y)}[\ell(yg(\bmx))],
\label{eq:pn_risk}
\end{align}
where $\mathbb{E}_{p(\bmx,y)}$ denotes the expectation over $p(\bmx,y)$, and $\ell(z)$ is a loss function.
When margin $z$ is small, $\ell(z)$ typically takes a large value.
Since $p(\bmx,y)$ is unknown,
the ordinary ERM approach \cite{Vapnik}
replaces the expectation with the average over training data
drawn independently from $p(\bmx,y)$.

However, in the Pconf classification scenario, we are only given 
positive data equipped with \emph{confidence}
$\mathcal{X} := \{(\bmx_i,r_i)\}^n_{i=1}$,
where $\bmx_i$ is a positive pattern drawn independently from $p(\bmx|y=+1)$
and $r_i$ is the positive confidence given by $r_i = p(y=+1|\bmx_i)$.
Note that this equality does not have to strictly hold as later shown in Section~\ref{sec:experiments}.
Since we have no access to negative data in the Pconf classification scenario,
we cannot directly employ the standard ERM approach.
In the next section, we show how the classification risk can be estimated
only from Pconf data.

\section{Pconf classification}
In this section, we propose an ERM framework for Pconf classification and derive an estimation error bound for the proposed method. Finally we give examples of practical implementations.
\subsection{Empirical risk minimization (ERM) framework}
Let $\pi_+ = p(y=+1)$ and $r(\bmx)=p(y=+1|\bmx)$,
and let $\mathbb{E}_+$ denote the expectation over $p(\bmx|y=+1)$.
Then the following theorem holds, which forms the basis of our approach:

\begin{theorem}\label{thm:Pconf_consistency}

The classification risk \eqref{eq:pn_risk} can be expressed as
\begin{align}
  R(g) &= \pi_+\mathbb{E}_+
               \left[\ell\big(g(\bmx)\big)+\frac{1-r(\bmx)}{r(\bmx)}\ell\big(-g(\bmx)\big)\right],
  \label{eq:risk-Pconf}
\end{align}
if we have $p(y=+1|\bmx) \neq 0$ for all $\bmx$ sampled from $p(\bmx)$.
\end{theorem}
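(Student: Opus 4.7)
The plan is to start from the definition of $R(g)$, split the expectation according to the two values of $y$, and then rewrite the joint densities $p(\bmx, y=\pm 1)$ in a way that exposes $p(\bmx \mid y=+1)$, since the positive class-conditional is the only distribution we can sample from. The identity \eqref{eq:risk-Pconf} should then fall out of a short manipulation.

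Concretely, I would first write
\[
R(g) = \int p(\bmx,y=+1)\,\ell(g(\bmx))\,d\bmx + \int p(\bmx,y=-1)\,\ell(-g(\bmx))\,d\bmx,
\]
and then handle each term. The positive term is immediate, since $p(\bmx,y=+1) = \pi_+\,p(\bmx\mid y=+1)$, giving $\pi_+\mathbb{E}_+[\ell(g(\bmx))]$. The negative term is where the real work happens: I need to re-express $p(\bmx,y=-1)$ as a multiple of $p(\bmx\mid y=+1)$. Using $p(\bmx,y=-1) = (1-r(\bmx))\,p(\bmx)$ together with Bayes' rule $p(\bmx) = \pi_+\,p(\bmx\mid y=+1)/r(\bmx)$ (valid wherever $r(\bmx)>0$), this becomes $\pi_+\,\frac{1-r(\bmx)}{r(\bmx)}\,p(\bmx\mid y=+1)$. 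Substituting back and combining the two expectations under $\mathbb{E}_+$ yields exactly \eqref{eq:risk-Pconf}.

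The only mildly subtle point is the division by $r(\bmx)$: strictly speaking, the identity $p(\bmx) = \pi_+ p(\bmx\mid y=+1)/r(\bmx)$ requires $r(\bmx)>0$ on the support of $p(\bmx\mid y=+1)$. This is automatic, however: if $\bmx$ is drawn from $p(\bmx\mid y=+1)$ then $p(\bmx,y=+1)>0$, which forces $r(\bmx)>0$. So the manipulation is well-defined $p(\bmx\mid y=+1)$-almost everywhere and no extra assumption is needed. I do not expect a real obstacle beyond this bookkeeping; the argument is essentially a one-line change of measure from $p(\bmx,y)$ to $p(\bmx\mid y=+1)$, with the importance weight $(1-r(\bmx))/r(\bmx)$ absorbing the negative-class contribution.
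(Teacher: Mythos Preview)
Your proposal is correct and follows essentially the same route as the paper: decompose $R(g)$ into its positive and negative parts, leave the positive part as $\pi_+\mathbb{E}_+[\ell(g(\bmx))]$, and use Bayes' rule in the form $p(\bmx)=\pi_+p(\bmx\mid y=+1)/r(\bmx)$ to convert the negative part into an expectation under $p(\bmx\mid y=+1)$ with weight $(1-r(\bmx))/r(\bmx)$. Your extra remark that $r(\bmx)>0$ holds $p(\bmx\mid y=+1)$-almost everywhere is a welcome bit of bookkeeping that the paper defers to its later analysis.
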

A proof is given in Appendix \ref{sec:proof_pconftheorem} in the supplementary material.
Equation \eqref{eq:risk-Pconf} does not include the expectation over negative data,
but only includes the expectation over positive data and their confidence values.
Furthermore, when \eqref{eq:risk-Pconf} is minimized with respect to $g$,
unknown $\pi_+$ is a proportional constant and thus can be safely ignored.
Conceptually, the assumption of $p(y=+1|\bmx)\neq 0$ is implying that the support of the negative distribution is the same or is included in the support of the positive distribution.

Based on this,
we propose the following ERM framework for Pconf classification:
\begin{align}
  \min_g\sum_{i=1}^n
\Big[\ell\big(g(\bmx_i)\big)+\frac{1-r_i}{r_i}\ell\big(-g(\bmx_i)\big)\Big].
  \label{ERM-Pconf}
\end{align}
It might be tempting to consider a similar empirical formulation as follows:
\begin{align}
  \min_g\sum_{i=1}^n
\Big[r_i\ell\big(g(\bmx_i)\big)+(1-r_i)\ell\big(-g(\bmx_i)\big)\Big].
  \label{ERM-biased}
\end{align}
Equation \eqref{ERM-biased} means that we weigh the positive loss
with positive-confidence $r_i$
and the negative loss with negative-confidence $1-r_i$.
This is quite natural and may look straightforward at a glance.
However, if we simply consider the population version of the objective function of
\eqref{ERM-biased}, we have
\begin{align}
&\mathbb{E}_+\Big[r(\bmx)\ell\big(g(\bmx)\big)+\big(1-r(\bmx)\big)\ell\big(-g(\bmx)\big)\Big]\nonumber\\
&= \mathbb{E}_+\Big[p(y=+1|\bmx)\ell\big(g(\bmx)\big)+p(y=-1|\bmx)\ell\big(-g(\bmx)\big)\Big]\nonumber\\
&= \mathbb{E}_+\Big[\sum_{y\in\{\pm 1\}}p(y|\bmx)\ell\big(yg(\bmx)\big)\Big]
= \mathbb{E}_+\Big[\mathbb{E}_{p(y|\bmx)}\big[\ell\big(yg(\bmx)\big)\big]\Big],
\label{risk-naive}
\end{align}
which is \emph{not} equivalent to the classification risk $R(g)$ defined by \eqref{eq:pn_risk}.
If the outer expectation was over $p(\bmx)$ instead of $p(\bmx|y=+1)$ in \eqref{risk-naive}, then it would be equal to \eqref{eq:pn_risk}.  This implies that if we had a different problem setting of having positive confidence equipped for $\bmx$ sampled from $p(\bmx)$, this would be trivially solved by a naive weighting idea.

From this viewpoint, \eqref{ERM-Pconf} can be regarded as an application of \emph{importance sampling} \cite{importance-sampling, book:Sugiyama+Kawanabe:2012} to \eqref{ERM-biased} to cope with the distribution difference between $p(\bmx)$ and $p(\bmx|y=+1)$, but with the advantage of \emph{not} requiring training data from the test distribution $p(\bmx)$.

In summary, our ERM formulation of \eqref{ERM-Pconf} is different from naive confidence-weighted classification of \eqref{ERM-biased}.
We further show in Section \ref{sec:theory} that the minimizer of \eqref{ERM-Pconf} converges to the true risk minimizer, while the minimizer of \eqref{ERM-biased} converges to a different quantity and hence learning based on \eqref{ERM-biased} is inconsistent.
\vspace{1mm}
\subsection{Theoretical analysis}
\vspace{1mm}
\label{sec:theory}
Here we derive an estimation error bound for the proposed method.
To begin with, let $\cG$ be our function class for ERM. Assume there exists $C_g>0$ such that
$\sup\nolimits_{g\in\cG}\|g\|_\infty\le C_g$
as well as $C_\ell>0$ such that
$\sup\nolimits_{|z|\le C_g}\ell(z)\le C_\ell$.
The existence of $C_\ell$ may be guaranteed for all reasonable $\ell$ given a reasonable $\cG$ in the sense that $C_g$ exists. As usual \cite{mohri12FML}, assume $\ell(z)$ is Lipschitz continuous for all $|z|\le C_g$ with a (not necessarily optimal) Lipschitz constant $L_\ell$.

Denote by $\hR(g)$ the objective function of \eqref{ERM-Pconf} times $\pi_+$, which is unbiased in estimating $R(g)$ in \eqref{eq:pn_risk} according to Theorem~\ref{thm:Pconf_consistency}.
Subsequently, let $g^*=\argmin\nolimits_{g\in\cG}R(g)$ be the true risk minimizer, and $\hg=\argmin\nolimits_{g\in\cG}\hR(g)$ be the empirical risk minimizer, respectively. The estimation error is defined as $R(\hg)-R(g^*)$, and we are going to bound it from above.

In Theorem~\ref{thm:Pconf_consistency}, $(1-r(\bmx))/r(\bmx)$ is playing a role inside the expectation, for the fact that
\[
r(\bmx)=p(y=+1\mid\bmx)>0 \textrm{ for } \bmx\sim p(\bmx\mid y=+1).
\]
In order to derive any error bound based on statistical learning theory, we should ensure that $r(\bmx)$ could never be too close to zero. To this end, assume there is $C_r>0$ such that $r(\bmx)\ge C_r$ almost surely. We may trim $r(\bmx)$ and then analyze the bounded but biased version of $\hR(g)$ alternatively. For simplicity, only the unbiased version is involved after assuming $C_r$ exists.

\begin{lemma}
  \label{thm:uni-dev}%
  For any $\delta>0$, the following uniform deviation bound holds with probability at least $1-\delta$ (over repeated sampling of data for evaluating $\hR(g)$):
  \begin{align}
  \label{eq:uni-dev-bnd}%
  \hspace*{-1ex}\sup\nolimits_{g\in\cG}|\hR(g)-R(g)|
  \le 2\pi_+\left(L_\ell+\frac{L_\ell}{C_r}\right)\fR_n(\cG)
  +\pi_+\left(C_\ell+\frac{C_\ell}{C_r}\right)\sqrt{\frac{\ln(2/\delta)}{2n}},
  \end{align}
  where $\fR_n(\cG)$ is the Rademacher complexity of $\cG$ for $\cX$ of size $n$ drawn from $p(\bmx\mid y=+1)$.%
  \footnote{$\fR_n(\cG) = \bE_\cX\bE_{\sigma_1,\ldots,\sigma_n} [\sup_{g\in\cG}\frac{1}{n}\sum_{\bmx_i\in\cX}\sigma_ig(\bmx_i)]$ where $\sigma_1,\ldots,\sigma_n$ are $n$ Rademacher variables following \cite{mohri12FML}.}
\end{lemma}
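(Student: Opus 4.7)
The plan is to decompose $\hR(g)-R(g)$ into the two pieces corresponding to the two terms of \eqref{eq:risk-Pconf}, control the expected uniform deviation of each piece by the Rademacher complexity via symmetrization and Talagrand's contraction, and then use McDiarmid's bounded differences inequality to upgrade from expectation to a high-probability statement. A union bound at level $\delta/2$ on each piece will produce \eqref{eq:uni-dev-bnd}.

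Concretely, write $\hR=\hR_1+\hR_2$ and $R=R_1+R_2$ with $\hR_1(g)=\frac{\pi_+}{n}\sum_{i=1}^n\ell(g(\bmx_i))$, $\hR_2(g)=\frac{\pi_+}{n}\sum_{i=1}^n\frac{1-r_i}{r_i}\ell(-g(\bmx_i))$, and $R_1,R_2$ the corresponding expectations under $p(\bmx\mid y=+1)$. Using $\sup_{|z|\le C_g}\ell(z)\le C_\ell$ together with $\frac{1-r_i}{r_i}\le 1/C_r$, the summands defining $\hR_1$ and $\hR_2$ are bounded by $\pi_+C_\ell$ and $\pi_+C_\ell/C_r$, respectively. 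Standard symmetrization followed by the contraction lemma (with Lipschitz constant $L_\ell$ for the outer function $z\mapsto\ell(z)$ in the first piece, and the data-dependent Lipschitz constant $\frac{1-r_i}{r_i}L_\ell\le L_\ell/C_r$ for $z\mapsto\frac{1-r_i}{r_i}\ell(-z)$ in the second) then yields $\bE\sup_{g\in\cG}|\hR_j(g)-R_j(g)|\le 2\pi_+L_j\fR_n(\cG)$ with $L_1=L_\ell$ and $L_2=L_\ell/C_r$.

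To pass from expectation to high probability, apply McDiarmid's inequality to the random variable $\sup_{g\in\cG}|\hR_j(g)-R_j(g)|$: changing a single sample $(\bmx_i,r_i)$ alters this supremum by at most $\pi_+C_\ell/n$ when $j=1$ and at most $\pi_+C_\ell/(C_r n)$ when $j=2$. Applied at level $\delta/2$, each inequality contributes an additional slack of $\pi_+C_\ell\sqrt{\ln(2/\delta)/(2n)}$ and $(\pi_+C_\ell/C_r)\sqrt{\ln(2/\delta)/(2n)}$, respectively. Combining these with the triangle inequality $\sup_g|\hR-R|\le\sup_g|\hR_1-R_1|+\sup_g|\hR_2-R_2|$ and a union bound over the two events then gives \eqref{eq:uni-dev-bnd}.

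The main subtlety is applying Talagrand's contraction to the $\hR_2$ piece, because the outer function $\phi_i(z)=\frac{1-r_i}{r_i}\ell(-z)$ varies across $i$ through the sample-dependent weight. This is handled by the index-dependent form of the contraction lemma combined with the uniform upper bound $L_\ell/C_r$ on the per-index Lipschitz constants, which is guaranteed by the assumption $r(\bmx)\ge C_r$ almost surely; the resulting $1/C_r$ factor is exactly what produces the $L_\ell+L_\ell/C_r$ and $C_\ell+C_\ell/C_r$ structure of the stated bound.
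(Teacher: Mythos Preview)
Your proposal is correct and uses the same essential ingredients as the paper's proof: McDiarmid's bounded-differences inequality for the concentration step, followed by symmetrization and Talagrand contraction (with the uniform bound $(1-r)/r\le 1/C_r$) to control the expected uniform deviation by $\fR_n(\cG)$. The only organizational difference is that you split $\hR=\hR_1+\hR_2$ and union-bound over the two pieces, whereas the paper keeps $\hR$ as a single sum (with per-sample bounded difference $\pi_+(C_\ell+C_\ell/C_r)/n$) and union-bounds over the two \emph{directions} $\sup_g(\hR-R)$ and $\sup_g(R-\hR)$; both routes produce the identical constants in \eqref{eq:uni-dev-bnd}.
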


Lemma~\ref{thm:uni-dev} guarantees that with high probability $\hR(g)$ concentrates around $R(g)$ for all $g\in\cG$, and the degree of such concentration is controlled by $\fR_n(\cG)$. Based on this lemma, we are able to establish an estimation error bound, as follows:

\begin{theorem}
  \label{thm:est-err}%
  For any $\delta>0$, with probability at least $1-\delta$ (over repeated sampling of data for training $\hg$), we have
  \begin{align}
  \label{eq:est-err-bnd}%
  R(\hg)-R(g^*)
  \le 4\pi_+\left(L_\ell+\frac{L_\ell}{C_r}\right)\fR_n(\cG)
  +2\pi_+\left(C_\ell+\frac{C_\ell}{C_r}\right)\sqrt{\frac{\ln(2/\delta)}{2n}}.
  \end{align}
\end{theorem}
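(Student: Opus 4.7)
The plan is to reduce Theorem \ref{thm:est-err} to Lemma \ref{thm:uni-dev} by the standard two-sided-deviation argument. Since $\hg$ minimizes $\hR$ over $\cG$, we have $\hR(\hg)-\hR(g^*)\le 0$, so inserting $\pm\hR(\hg)$ and $\pm\hR(g^*)$ gives
\begin{align*}
R(\hg)-R(g^*)
&=\big(R(\hg)-\hR(\hg)\big)+\big(\hR(\hg)-\hR(g^*)\big)+\big(\hR(g^*)-R(g^*)\big)\\
&\le\big(R(\hg)-\hR(\hg)\big)+\big(\hR(g^*)-R(g^*)\big)\\
&\le 2\sup\nolimits_{g\in\cG}|\hR(g)-R(g)|.
\end{align*}
On the event of probability at least $1-\delta$ on which \eqref{eq:uni-dev-bnd} holds, substituting the bound from Lemma \ref{thm:uni-dev} into the right-hand side yields exactly \eqref{eq:est-err-bnd}: the factor of $2$ turns $2\pi_+(L_\ell+L_\ell/C_r)\fR_n(\cG)$ into $4\pi_+(L_\ell+L_\ell/C_r)\fR_n(\cG)$ and $\pi_+(C_\ell+C_\ell/C_r)\sqrt{\ln(2/\delta)/(2n)}$ into $2\pi_+(C_\ell+C_\ell/C_r)\sqrt{\ln(2/\delta)/(2n)}$, which is precisely what is claimed.

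There is essentially no obstacle at this step: this is the textbook reduction from a uniform deviation bound to an estimation error bound (cf.\ \citealp{mohri12FML}, Theorem~4.1), and all the probabilistic and complexity-theoretic content has already been absorbed into Lemma \ref{thm:uni-dev}. I would only need to be slightly careful that a single event of probability $1-\delta$ controls both $R(\hg)-\hR(\hg)$ and $\hR(g^*)-R(g^*)$ simultaneously, but this is automatic because $\sup_{g\in\cG}|\hR(g)-R(g)|$ dominates each of the two quantities on the event provided by Lemma \ref{thm:uni-dev}. The only genuinely nontrivial pieces lie upstream, in the proof of Lemma \ref{thm:uni-dev} itself, namely using the assumption $r(\bmx)\ge C_r$ to bound $(1-r(\bmx))/r(\bmx)\le 1/C_r$, applying Talagrand's contraction lemma to strip $\ell$ from the two summands (producing the Lipschitz factors $L_\ell$ and $L_\ell/C_r$ on $\fR_n(\cG)$), and invoking McDiarmid's inequality with per-coordinate bounded differences of order $(C_\ell+C_\ell/C_r)/n$.
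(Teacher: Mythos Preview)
Your proof is correct and follows essentially the same approach as the paper's own argument: the standard decomposition $R(\hg)-R(g^*)=(R(\hg)-\hR(\hg))+(\hR(\hg)-\hR(g^*))+(\hR(g^*)-R(g^*))$, the use of $\hR(\hg)\le\hR(g^*)$ to drop the middle term, and then bounding the remaining two terms by $2\sup_{g\in\cG}|\hR(g)-R(g)|$ before invoking Lemma~\ref{thm:uni-dev}. Your remark that a single high-probability event from Lemma~\ref{thm:uni-dev} suffices for both deviations is exactly the point, and your summary of where the real work lies (inside Lemma~\ref{thm:uni-dev}) is accurate.
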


Theorem~\ref{thm:est-err} guarantees learning with \eqref{ERM-Pconf} is consistent \cite{ledoux91PBS}: $n\to\infty$ always means $R(\hg)\to R(g^*)$. Consider linear-in-parameter models defined by
\vspace{1mm}
\begin{align*}
\cG &= \{g(\bmx)=\langle{w,\phi(\bmx)}\rangle_\cH \mid
 \|w\|_\cH\le C_w, \|\phi(\bmx)\|_\cH\le C_\phi \},
\end{align*}
where $\cH$ is a Hilbert space, $\langle\cdot,\cdot\rangle_\cH$ is the inner product in $\cH$, $w\in\cH$ is the normal, $\phi:\bR^d\to\cH$ is a feature map, and $C_w>0$ and $C_\phi>0$ are constants \cite{scholkopf01LK}. It is known that $\fR_n(\cG)\le C_wC_\phi/\sqrt{n}$ \cite{mohri12FML} and thus $R(\hg)\to R(g^*)$ in $\cO_p(1/\sqrt{n})$, where $\cO_p$ denotes the order in probability. This order is already the optimal parametric rate and cannot be improved without additional strong assumptions on $p(\bmx,y)$, $\ell$ and $\cG$ jointly \cite{mendelson08tit}. Additionally, if $\ell$ is strictly convex we have $\hg\to g^*$, and if the aforementioned $\cG$ is used $\hg\to g^*$ in $\cO_p(1/\sqrt{n})$ \cite{convexopt_book}.

At first glance, learning with \eqref{ERM-biased} is numerically more stable; however, it is generally inconsistent, especially when $g$ is linear in parameters and $\ell$ is strictly convex. Denote by $\hR'(g)$ the objective function of \eqref{ERM-biased} times $\pi_+$, which is unbiased to $R'(g)=\pi_+\bE_+\bE_{p(y\mid\bmx)}[\ell(yg(\bmx))]$ rather than $R(g)$. By the same technique for proving \eqref{eq:uni-dev-bnd} and \eqref{eq:est-err-bnd}, it is not difficult to show that with probability at least $1-\delta$,
\begin{align*}
\hspace*{-1ex}\sup\nolimits_{g\in\cG}|\hR'(g)-R'(g)|
\le 4\pi_+L_\ell\fR_n(\cG)
+2\pi_+C_\ell\sqrt{\frac{\ln(2/\delta)}{2n}},
\end{align*}
and hence
\begin{align*}
R'(\hg')-R'(g'^*)
\le 8\pi_+L_\ell\fR_n(\cG)
+4\pi_+C_\ell\sqrt{\frac{\ln(2/\delta)}{2n}},
\end{align*}
where
\[ g'^*=\argmin\nolimits_{g\in\cG}R'(g)\quad\text{and}\quad \hg'=\argmin\nolimits_{g\in\cG}\hR'(g). \]
As a result, when the strict convexity of $R'(g)$ and $\hR'(g)$ is also met, we have $\hg'\to g'^*$. This demonstrates the inconsistency of learning with \eqref{ERM-biased}, since $R'(g)\neq R(g)$ which leads to $g'^*\neq g^*$ given any reasonable $\cG$.
\vspace{1mm}
\subsection{Implementation}
\vspace{1mm}
Finally we give examples of implementations.
As a classifier $g$, let us consider a linear-in-parameter model
$g(\bmx) = \bm{\alpha}^\top \bm{\phi}(\bmx)$,
where $^\top$ denotes the transpose,
$\bm{\phi}(\bmx)$ is a vector of basis functions,
and $\bm{\alpha}$ is a parameter vector.
Then from \eqref{ERM-Pconf}, the $\ell_2$-regularized ERM is formulated as
\begin{align*}
\min_{\bm{\alpha}}\sum_{i=1}^n
\Big[\ell\big(\bm{\alpha}^\top \bm{\phi}(\bmx_i)\big)
  +\frac{1-r_i}{r_i}\ell\big(-\bm{\alpha}^\top \bm{\phi}(\bmx_i)\big)\Big]
  +\frac{\lambda}{2}\bm{\alpha}^\top\!\bm{R}\bm{\alpha},
\end{align*}
where $\lambda$ is a non-negative constant
and $\bm{R}$ is a positive semi-definite matrix.  In practice, we can use any loss functions such as squared loss $\ell_{\mathrm{S}}(z)=(z-1)^2$, hinge loss $\ell_{\mathrm{H}}(z)=\max(0,1-z)$, and ramp loss $\ell_{\mathrm{R}}(z)=\min(1,\max(0,1-z))$.  In the experiments in Section \ref{sec:experiments}, we use the logistic loss $\ell_{L}(z) = \log(1+e^{-z})$, which yields,
\begin{align}
\label{logistic_loss}
\min_{\bm{\alpha}}
 \sum^n_{i=1}&\!\Big[\!\log\!\big(1+e^{-\bm{\alpha}^\top\! \bm{\phi}(\bmx_i)}\big)
 \!+ \frac{1-r_i}{r_i}\!\log\!\big(1+e^{\bm{\alpha}^\top\! \bm{\phi}(\bmx_i)}\big)\Big]
 \! +\!\frac{\lambda}{2}\bm{\alpha}^\top\!\bm{R}\bm{\alpha}.
\end{align}
The above objective function is continuous and differentiable,
and therefore optimization can be efficiently performed, for example, by quasi-Newton \cite{numerical_opt} or stochastic gradient methods \cite{understanding}.

\section{Experiments}
\label{sec:experiments}
In this section, we numerically illustrate the behavior of the proposed method
on synthetic datasets for linear models.
We further demonstrate the usefulness of the proposed method
on benchmark datasets for deep neural networks that are highly nonlinear models.
The implementation is based on PyTorch \cite{paszke2017automatic}, Sklearn \cite{scikit-learn}, and mpmath \cite{mpmath}.
Our code will be available on \href{http://github.com/takashiishida/pconf}{http://github.com/takashiishida/pconf}.

\subsection{Synthetic experiments with linear models}
\vspace{2mm}
\label{sec:synthetic_experiments}
\paragraph{Setup:}

We used two-dimensional Gaussian distributions with means $\bm{\mu}_{+}$ and $\bm{\mu}_{-}$ and covariance matrices $\bm{\Sigma}_{+}$ and $\bm{\Sigma}_{-}$, for $p(\bmx|y=+1)$ and $p(\bmx|y=-1)$, respectively.
For these parameters, we tried various combinations visually shown in Figure~\ref{fig:visual}.
The specific parameters used for each setup are:
\vspace{1mm}
\begin{itemize}
\item Setup A: \quad
$\bm{\mu}_{+}=[0,0]^\top,\ \bm{\mu}_{-}=[-2,5]^\top,\ \bm{\Sigma}_{+}
   = \left[
    \begin{array}{rr}
      7 & -6  \\
      -6 & 7  \\
    \end{array}
  \right],\ \bm{\Sigma}_{-}
     = \left[
    \begin{array}{rr}
      2 & 0  \\
      0 & 2  \\
    \end{array}
  \right].$
\item Setup B: \quad
$\bm{\mu}_{+}=[0,0]^\top, \bm{\mu}_{-}=[0,4]^\top, \bm{\Sigma}_{+}
   = \left[
    \begin{array}{rr}
      5 & 3  \\
      3 & 5  \\
    \end{array}
  \right],\ \bm{\Sigma}_{-}
     = \left[
    \begin{array}{rr}
      5 & -3  \\
      -3 & 5  \\
    \end{array}
  \right].$
\item Setup C: \quad
$\bm{\mu}_{+}=[0,0]^\top, \bm{\mu}_{-}=[0,8]^\top, \bm{\Sigma}_{+}   = \left[
    \begin{array}{rr}
      7 & -6  \\
      -6 & 7  \\
    \end{array}
  \right],\ \bm{\Sigma}_{-}
     = \left[
    \begin{array}{rr}
      7 & 6  \\
      6 & 7  \\
    \end{array}
  \right].$
\item Setup D: \quad
$\bm{\mu}_{+}=[0,0]^\top, \bm{\mu}_{-}=[0,4]^\top, \bm{\Sigma}_{+}   = \left[
    \begin{array}{rr}
      4 & 0  \\
      0 & 4  \\
    \end{array}
  \right],\ \bm{\Sigma}_{-}
     = \left[
    \begin{array}{rr}
      1 & 0  \\
      0 & 1  \\
    \end{array}
  \right].$
\end{itemize}
In the case of using two Gaussian distributions, $p(y=+1|\bmx)>0$ is satisfied for any $\bmx$ sampled from $p(\bmx)$, which is a necessary condition for applying Theorem~\ref{thm:Pconf_consistency}.
500 positive data and 500 negative data were generated independently from each distribution for training.\footnote{Negative training data are used only in the fully-supervised method that is tested for performance comparison.}
Similarly, 1,000 positive and 1,000 negative data were generated for testing.
\begin{figure*}[t]
\caption{
Illustrations based on a single trail of the four setups used in experiments with various Gaussian distributions.
The red and green lines are decision boundaries obtained by Pconf and Weighted classification, respectively, where only positive data with confidence are used (no negative data).
The black boundary is obtained by O-SVM, which uses only hard-labeled positive data.
The blue boundary is obtained by the fully-supervised method using data from both classes.
Histograms of confidence of positive data are shown below.
}
\label{fig:visual}
\centering
  \includegraphics[bb = 0 0 2510 652, scale = 0.3]{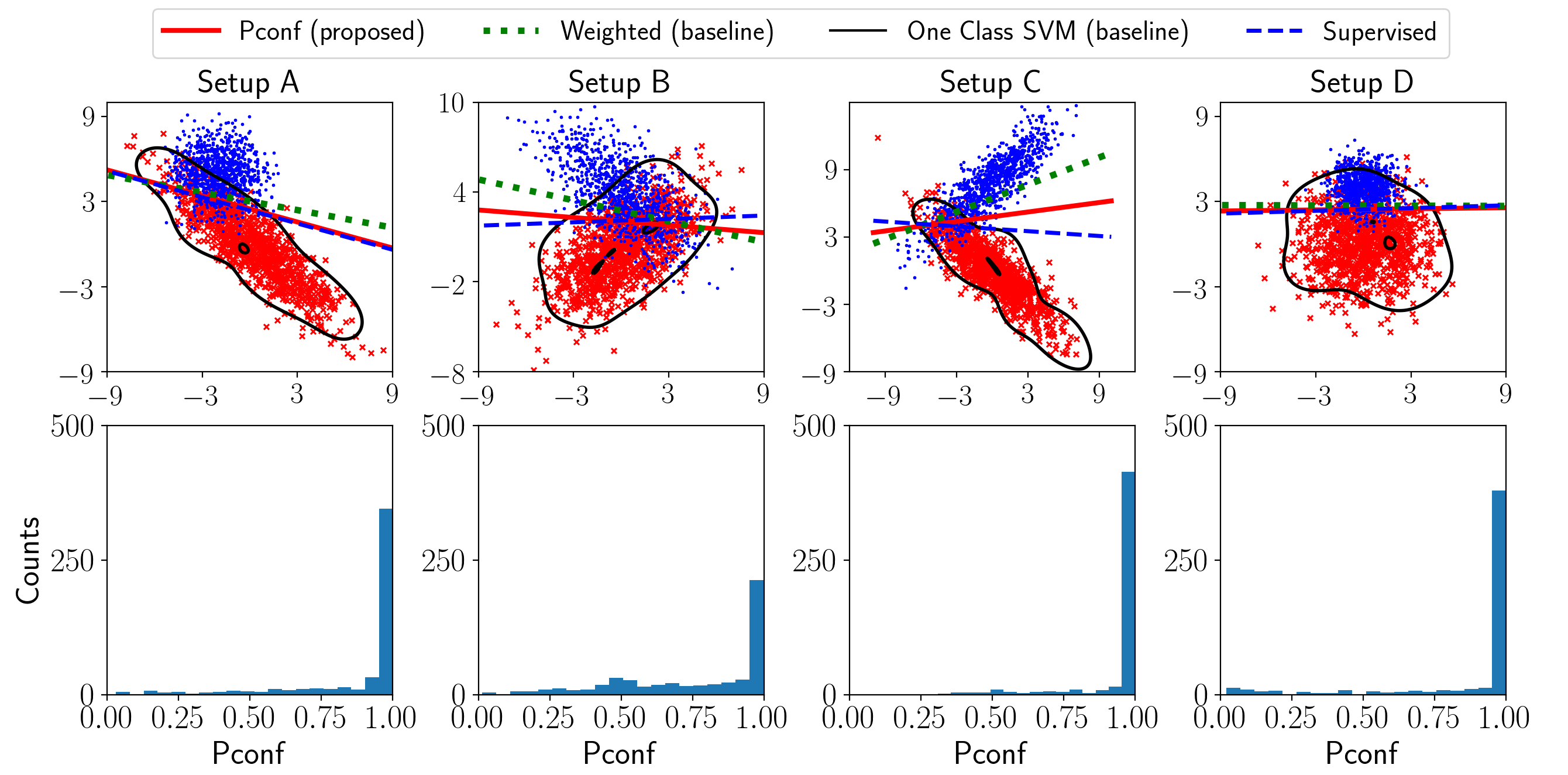}  % 1738 846
\end{figure*}
We compared our proposed method \eqref{ERM-Pconf}
with the weighted classification method \eqref{ERM-biased}, a regression based method (predict the confidence value itself and post-process output to a binary signal by comparing it to $0.5$), one-class support vector machine (O-SVM, \cite{one-class-svm-2}) with the Gaussian kernel, and a
fully-supervised method based on the empirical version of \eqref{eq:pn_risk}.
Note that the proposed method, weighted method, and regression based method only use Pconf data, O-SVM only uses (hard-labeled) positive data, and the fully-supervised method uses both positive and negative data.

In the proposed, weighted, fully-supervised methods,
linear-in-input model $g(\bmx) = \bm{\alpha}^\top\bmx + b$ and the logistic loss were commonly used
and \emph{vanilla gradient descent} with $5,000$ epochs (full-batch size) and learning rate $0.001$ was used for optimization.
For the regression-based method, we used the squared loss and analytical solution \cite{hastie2009}.
For the purpose of clear comparison of the risk, we did not use regularization in this toy experiment.
An exception was O-SVM, where the user is required to subjectively pre-specify regularization parameter $\nu$ and Gaussian bandwidth $\gamma$.  We set them at $\nu=0.05$ and $ \gamma=0.1$.\footnote{If we naively use default parameters in Sklearn \cite{scikit-learn} instead, which is the usual case in the real world without negative data for validation, the classification accuracy of O-SVM is worse for all setups except D in Table~\ref{tb:overlap}, which demonstrates the difficulty of using O-SVM.}

\paragraph{Analysis with true positive-confidence:}
Our first experiments were conducted when true positive-confidence was known.
The positive-confidence $r(\bmx)$ was analytically computed from the two Gaussian densities and given to each positive data.
The results in Table~\ref{tb:overlap} show that the proposed Pconf method is significantly better than the baselines in all cases.
In most cases, the proposed Pconf method has similar accuracy compared with the fully supervised case, excluding Setup C where there is a few percent loss.
Note that the naive weighted method is consistent if the model is correctly specified, but becomes inconsistent if misspecified \cite{book:Sugiyama+Kawanabe:2012}.\footnote{Since our proposed method has coefficient $\frac{1-r_i}{r_i}$ in the
2nd term of \eqref{ERM-Pconf}, it may suffer numerical problems, e.g., when $r_i$ is
extremely small.
To investigate this, we used the mpmath package \cite{mpmath} to compute the
gradient with arbitrary precision.
The experimental results were actually not that much different from
the ones obtained with single precision,
implying that the numerical problems are not much troublesome.
}

\begin{table*}[t]
  \caption{Comparison of the proposed Pconf classification with other methods, with varying degrees of overlap between the positive and negative distributions.  We report the mean and standard deviation of the classification accuracy over 20 trials.  We show the best and equivalent methods based on the 5\% t-test in bold, excluding the fully-supervised method and O-SVM whose settings are different from the others.}
  \label{tb:overlap}
  \center
  \begin{tabular}{c|ccc|c|c}
\toprule
Setup & Pconf & Weighted & Regression & O-SVM & Supervised\\
\midrule
A & $\bm{89.7\pm0.6}$ & $88.7\pm1.2$ & $68.4\pm6.5$ & $76.0\pm3.5$ & $89.8\pm0.7$\\
B & $\bm{81.2\pm1.1}$ & $78.1\pm1.8$ & $73.2\pm3.2$ & $71.3\pm2.3$ & $81.4\pm1.0$\\
C & $\bm{90.2\pm9.1}$ & $82.7\pm13.1$ & $50.5\pm1.7$ & $90.8\pm1.2$ & $93.6\pm0.5$\\
D & $\bm{91.5\pm0.5}$ & $90.8\pm0.7$ & $64.6\pm5.3$ & $57.1\pm4.8$ & $91.4\pm0.5$\\
\bottomrule
\end{tabular}
\end{table*}
\begin{table*}[t]
  \tabcolsep=0.15cm
  \begin{minipage}{.22\linewidth}
  \center
  \caption{Mean and standard deviation of the classification accuracy with noisy positive confidence.  The experimental setup is the same as Table~\ref{tb:overlap}, except that positive confidence scores for positive data are noisy.  Std. is the standard deviation of Gaussian noise.}
  \label{tb:noise}
  \end{minipage}
  \quad
  \begin{minipage}{.36\linewidth}
  \center
  \begin{tabular}{c|cc}
\multicolumn{3}{c}{Setup A}\\
\toprule
Std. & Pconf & Weighted \\
\midrule
0.01& $\bm{89.8\pm0.6}$ & $88.8\pm0.9$\\
0.05& $\bm{89.7\pm0.6}$ & $88.3\pm1.1$\\
0.10& $\bm{89.2\pm0.7}$ & $87.6\pm1.4$\\
0.20& $\bm{85.9\pm2.5}$ & $\bm{85.8\pm2.5}$\\
\bottomrule
  \end{tabular}\vspace{2mm}
  \begin{tabular}{c|cc}
\multicolumn{3}{c}{Setup B}\\
\toprule
Std. & Pconf & Weighted \\
\midrule
0.01& $\bm{81.2\pm0.9}$ & $78.2\pm1.4$\\
0.05& $\bm{80.7\pm2.3}$ & $78.1\pm1.4$\\
0.10& $\bm{80.8\pm1.2}$ & $77.8\pm1.5$\\
0.20& $\bm{77.8\pm1.4}$ & $\bm{77.2\pm1.9}$\\
\bottomrule
  \end{tabular}
  \end{minipage} \quad \begin{minipage}{0.36\linewidth}
    \center
  \begin{tabular}{c|cc}
\multicolumn{3}{c}{Setup C}\\
\toprule
Std. & Pconf & Weighted \\
\midrule
0.01& $\bm{92.4\pm1.7}$ & $84.0\pm8.2$\\
0.05& $\bm{92.2\pm3.3}$ & $78.5\pm11.3$\\
0.10& $\bm{90.8\pm9.5}$ & $72.6\pm12.9$\\
0.20& $\bm{88.0\pm9.5}$ & $65.5\pm13.1$\\
\bottomrule
  \end{tabular}\vspace{2mm}
  \begin{tabular}{c|cc}
\multicolumn{3}{c}{Setup D}\\
\toprule
Std. & Pconf & Weighted \\
\midrule
0.01& $\bm{91.6\pm0.5}$ & $90.6\pm0.9$\\
0.05& $\bm{91.5\pm0.5}$ & $89.9\pm1.2$\\
0.10& $\bm{90.8\pm0.7}$ & $88.7\pm1.8$\\
0.20& $\bm{87.7\pm0.8}$ & $85.5\pm3.7$\\
\bottomrule
  \end{tabular}
  \end{minipage}
\end{table*}

\paragraph{Analysis with noisy positive-confidence:}
In the above toy experiments, we assumed that true positive confidence $r(\bmx)=p(y=+1|\bmx)$ is exactly accessible, but this can be unrealistic in practice.
To investigate the influence of noise in positive-confidence,
we conducted experiments with noisy positive-confidence.

As noisy positive confidence,
we added zero-mean Gaussian noise with standard deviation chosen from \{0.01, 0.05, 0.1, 0.2\}.  As the standard deviation gets larger, more noise will be incorporated into positive-confidence.
When the modified positive-confidence was over 1 or below 0.01, we clipped it to 1 or rounded up to 0.01 respectively.

The results are shown in Table~\ref{tb:noise}.
As expected, the performance starts to deteriorate as the confidence becomes
more noisy (i.e., as the standard deviation of Gaussian noise is larger), but the proposed method still works reasonably well in almost all cases.
\subsection{Benchmark experiments with neural network models}
\label{sec:benchmark-experiments}
Here, we use more realistic benchmark datasets and more flexible neural network models for experiments.
\vspace{-2mm}
\paragraph{Fashion-MNIST:}

The \emph{Fashion-MNIST dataset}\footnote{\href{https://github.com/zalandoresearch/fashion-mnist}{https://github.com/zalandoresearch/fashion-mnist}} consists of 70,000 examples where each sample is a 28 $\times$ 28 gray-scale image (input dimension is 784), associated with a label from 10 fashion item classes.
We standardized the data to have zero mean and unit variance.

First, we chose ``T-shirt/top'' as the positive class, and another item for the negative class.
The binary dataset was then divided into four sub-datasets: a training set, a validation set, a test set, and a dataset for learning a probabilistic classifier to estimate positive-confidence.
Note that we ask labelers for positive-confidence values in real-world Pconf classification,
but we obtained positive-confidence values through a probabilistic classifier here.

We used logistic regression with the same network architecture
as a probabilistic classifier to generate confidence.\footnote{Both positive and negative data are used to train the probabilistic classifier to estimate confidence, and this data is separated from any other process of experiments.}
However, instead of weight decay, we used \emph{dropout} \cite{dropout} with rate 50\% after each fully-connected layer, and early-stopping with 20 epochs, since softmax output of flexible neural networks tends to be extremely close to 0 or 1 \cite{Goodfellow-et-al-2016}, which is not suitable as a representation of confidence.  Furthermore, we rounded up positive confidence less than 1\% to 1\% to stabilize the optimization process.

We compared Pconf classification \eqref{ERM-Pconf} with weighted classification \eqref{ERM-biased} and fully-supervised classification based on the empirical version of \eqref{eq:pn_risk}.  We used the logistic loss for these methods.  We also compared our method with Auto-Encoder \cite{hinton2006} as a one-class classification method.

Except Auto-Encoder, we used a fully-connected neural network of three hidden layers ($d$-100-100-100-1) with \emph{rectified linear units (ReLU)} \cite{relu} as the activation functions, and weight decay candidates were chosen from $\{10^{-7}, 10^{-4}, 10^{-1}\}$.  \emph{Adam} \cite{adam} was again used for optimization with 200 epochs and mini-batch size 100.

To select hyper-parameters with validation data,
we used the zero-one loss versions of \eqref{ERM-Pconf} and \eqref{ERM-biased} for Pconf classification and weighted classification, respectively,
since no negative data were available in the validation process
and thus we could not directly use the classification accuracy.
On the other hand, the classification accuracy was directly used 
for hyper-parameter tuning of the fully-supervised method,
which is extremely advantageous.
We reported the test accuracy of the model with the best validation score out of all epochs.

Auto-Encoder was trained with (hard-labeled) positive data, and we classified test data into positive class if the mean squared error (MSE) is below a threshold of 70\% quantile, and into negative class otherwise.
Since we have no negative data for validating hyper-parameters, we sort the MSEs of training positive data in ascending order.
We set the weight decay to $10^{-4}$. The architecture is $d$-100-100-100-100 for encoding and the reversed version for decoding, with \emph{ReLU} after hidden layers and \emph{Tanh} after the final layer.

\begin{comment}
\end{comment}
\begin{table*}[t]
  \centering
  \caption{
Mean and standard deviation of the classification accuracy
over 20 trials for the Fashion-MNIST dataset
with fully-connected three hidden-layer neural networks.
Pconf classification was compared with the baseline Weighted classification method, Auto-Encoder method and fully-supervised method, with \emph{T-shirt} as the positive class and different choices for the negative class.
The best and equivalent methods are shown in bold based on the 5\% t-test,
excluding the Auto-Encoder method and fully-supervised method.}
  \label{tb:FashonMNIST}
  \begin{tabular}{c|cc|c|c}
\toprule
P / N & Pconf & Weighted & Auto-Encoder & Supervised \\
\midrule
T-shirt / trouser & $\bm{92.14\pm4.06}$ & $85.30\pm9.07$ & $71.06\pm1.00$ & $98.98\pm0.16$ \\
\midrule
T-shirt / pullover & $\bm{96.00\pm0.29}$ & $\bm{96.08\pm1.05}$ & $70.27\pm1.22$ & $96.17\pm0.34$ \\
\midrule
T-shirt / dress & $\bm{91.52\pm1.14}$ & $89.31\pm1.08$ & $53.82\pm0.93$ & $96.56\pm0.34$ \\
\midrule
T-shirt / coat & $\bm{98.12\pm0.33}$ & $\bm{98.13\pm1.12}$ & $68.74\pm0.98$ & $98.44\pm0.13$ \\
\midrule
T-shirt / sandal & $\bm{99.55\pm0.22}$ & $87.83\pm18.79$ & $82.02\pm0.49$ & $99.93\pm0.09$ \\
\midrule
T-shirt / shirt & $\bm{83.70\pm0.46}$ & $\bm{83.60\pm0.65}$ & $57.76\pm0.55$ & $85.57\pm0.69$ \\
\midrule
T-shirt / sneaker & $\bm{89.86\pm13.32}$ & $58.26\pm14.27$ & $83.70\pm0.26$ & $100.00\pm0.00$ \\
\midrule
T-shirt / bag & $\bm{97.56\pm0.99}$ & $95.34\pm1.00$ & $82.79\pm0.70$ & $99.02\pm0.29$ \\
\midrule
T-shirt / ankle boot & $\bm{98.84\pm1.43}$ & $88.87\pm7.86$ & $85.07\pm0.37$ & $99.76\pm0.07$ \\
\bottomrule
  \end{tabular}
\end{table*}
\begin{table*}[t]
  \centering
  \caption{
Mean and standard deviation of the classification accuracy
over 20 trials for the CIFAR-10 dataset
with convolutional neural networks.
Pconf classification was compared with the baseline Weighted classification method, Auto-Encoder method and fully-supervised method, with \emph{airplane} as the positive class and different choices for the negative class.
The best and equivalent methods are shown in bold based on the 5\% t-test,
excluding the Auto-Encorder method and fully-supervised method.}
  \label{tb:cifar10}
  \begin{tabular}{c|cc|c|c}
\toprule
P / N & Pconf & Weighted & Auto-Encoder & Supervised \\
\midrule
airplane / automobile & $\bm{82.68\pm1.89}$ & $76.21\pm2.43$ & $75.13\pm0.42$ & $93.96\pm0.58$ \\
\midrule
airplane / bird & $\bm{82.23\pm1.21}$ & $80.66\pm1.60$ & $54.83\pm0.39$ & $87.76\pm4.97$ \\
\midrule
airplane / cat & $85.18\pm1.35$ & $\bm{89.60\pm0.92}$ & $61.03\pm0.59$ & $92.90\pm0.58$ \\
\midrule
airplane / deer & $\bm{87.68\pm1.36}$ & $\bm{87.24\pm1.58}$ & $55.60\pm0.53$ & $93.35\pm0.77$ \\
\midrule
airplane / dog & $\bm{89.91\pm0.85}$ & $\bm{89.08\pm1.95}$ & $62.64\pm0.63$ & $94.61\pm0.45$ \\
\midrule
airplane / frog & $\bm{90.80\pm0.98}$ & $81.84\pm3.92$ & $62.52\pm0.68$ & $95.95\pm0.40$ \\
\midrule
airplane / horse & $\bm{89.82\pm1.07}$ & $85.10\pm2.61$ & $67.55\pm0.73$ & $95.65\pm0.37$ \\
\midrule
airplane / ship & $\bm{69.71\pm2.37}$ & $\bm{70.68\pm1.45}$ & $52.09\pm0.42$ & $81.45\pm8.87$ \\
\midrule
airplane / truck & $81.76\pm2.09$ & $\bm{86.74\pm0.85}$ & $73.74\pm0.38$ & $92.10\pm0.82$ \\
\bottomrule
  \end{tabular}
\end{table*}
\vspace{-2mm}

\paragraph{CIFAR-10:}
The \emph{CIFAR-10 dataset}
\footnote{\href{https://www.cs.toronto.edu/~kriz/cifar.html}{https://www.cs.toronto.edu/\~{}kriz/cifar.html}}
consists of 10 classes, with 5,000 images in each class.
Each image is given in a $32\times32\times3$ format.
We chose ``airplane'' as the positive class and one of the other classes as the negative class in order to construct a dataset for binary classification.
We used the neural network architecture specified in Appendix~\ref{sec:CNN_architecture}.

For the probabilistic classifier, the same architecture as that for Fashion-MNIST was used except \emph{dropout} with rate 50\% was added after the first two fully-connected layers.  For Auto-Encoder, the MSE threshold was set to 80\% quantile, and we used the architecture specified in Appendix~\ref{sec:AE_architecture}.
Other details such as the loss function and weight-decay follow the same setup as the Fashion-MNIST experiments.

\paragraph{Results:}
The results in Table~\ref{tb:FashonMNIST} and Table~\ref{tb:cifar10} show that in most cases, Pconf classification either outperforms or is comparable to the weighted classification baseline, outperforms Auto-Encoder, and is even comparable to the fully-supervised method in some cases.

\section{Conclusion}
\label{sec:conclusion}
We proposed a novel problem setting and algorithm for binary classification from positive data equipped with confidence.
Our key contribution was to show that an unbiased estimator of the classification risk can be obtained for positive-confidence data, without negative data or even unlabeled data.
This was achieved by reformulating the classification risk based on both positive and negative data, to an equivalent expression that only requires positive-confidence data.
Theoretically, we established an estimation error bound, and experimentally demonstrated the usefulness of our algorithm.
\newpage
\subsubsection*{Acknowledgments}
TI was supported by Sumitomo Mitsui Asset Management.
MS was supported by JST CREST JPMJCR1403.
We thank Ikko Yamane and Tomoya Sakai for the helpful discussions.
We also thank anonymous reviewers for pointing out numerical issues in our experiments, and for pointing out the necessary condition in Theorem~\ref{thm:Pconf_consistency} in our earlier work of this paper.

\bibliographystyle{plain}
\bibliography{referencefile}
\clearpage

%-------------------------------------------------------------------------
\appendix
\section{Proofs}

\subsection{Proof of Theorem~\ref{thm:Pconf_consistency}}\label{sec:proof_pconftheorem}

The classification risk \eqref{eq:pn_risk} can be expressed and decomposed as
\begin{align}
  R(g)
=&\sum_{y=\pm1}\int\ell\big(yg(\bmx)\big)p(\bmx|y)p(y)\mathrm{d}\bmx\nonumber\\
=&\int\ell\big(g(\bmx)\big)p(\bmx|y=+1)p(y=+1)\mathrm{d}\bmx
+ \int\ell\big(-g(\bmx)\big)p(\bmx|y=-1)p(y=-1)\mathrm{d}\bmx\nonumber\\
 =& \pi_+\mathbb{E}_+[\ell(g(\bmx))]
  + \pi_-\mathbb{E}_-[\ell(-g(\bmx))],
  \label{eq:pn_risk_2}
\end{align}
where
$\pi_-=p(y=-1)$ and $\mathbb{E}_-$ denotes the expectation over $p(\bmx|y=-1)$.
Since
\begin{align*}
\pi_+p(\bmx|y=+1)+\pi_-p(\bmx|y=-1)
&=p(\bmx,y=+1)+p(\bmx,y=-1)\\
&=p(\bmx)\\
&=\frac{p(\bmx,y=+1)}{p(y=+1|\bmx)}\\
&=\frac{\pi_+p(\bmx|y=+1)}{r(\bmx)},
\end{align*}
where the third equality requires the assumption of $p(y=+1|\bmx) \neq 0$ stated in Theorem \ref{thm:Pconf_consistency}.
we have
\begin{align*}
  \pi_-p(\bmx|y=-1)=\pi_+p(\bmx|y=+1)\left(\frac{1-r(\bmx)}{r(\bmx)}\right).
\end{align*}
Then the second term in \eqref{eq:pn_risk_2} can be expressed as
\begin{align*}
\pi_-\mathbb{E}_-[\ell(-g(\bmx))]
&=\int \pi_- p(\bmx|y=-1)\ell(-g(\bmx))\mathrm{d}\bmx\\
&=\int\pi_+p(\bmx|y=+1)\left(\frac{1-r(\bmx)}{r(\bmx)}\right)\ell(-g(\bmx))\mathrm{d}\bmx\\
&=\pi_+\mathbb{E}_+\left[\frac{1-r(\bmx)}{r(\bmx)}\ell(-g(\bmx))\right],
\end{align*}
which concludes the proof.
\qed

\subsection{Proof of Lemma~\ref{thm:uni-dev}}

By assumption, it holds almost surely that
\begin{align*}
\frac{1-r(\bmx)}{r(\bmx)}\le\frac{1}{C_r};
\end{align*}
due to the existence of $C_\ell$, the change of $\hR(g)$ will be no more than $(C_\ell+C_\ell/C_r)/n$ if some $\bmx_i$ is replaced with $\bmx_i'$.

Consider a single direction of the uniform deviation: $\sup_{g\in\cG}\hR(g)-R(g)$. Note that the change of $\sup_{g\in\cG}\hR(g)-R(g)$ shares the same upper bound with the change of $\hR(g)$, and \emph{McDiarmid's inequality} \cite{mcdiarmid89MBD} implies that
\begin{align*}
\pr\left\{\sup\nolimits_{g\in\cG}\hR(g)-R(g)
-\bE_\cX\left[\sup\nolimits_{g\in\cG}\hR(g)-R(g)\right]\ge\epsilon\right\}
\le \exp\left(-\frac{2\epsilon^2n}{(C_\ell+C_\ell/C_r)^2}\right),
\end{align*}
or equivalently, with probability at least $1-\delta/2$,
\begin{align*}
\sup\nolimits_{g\in\cG}\hR(g)-R(g) \le \bE_\cX\left[\sup\nolimits_{g\in\cG}\hR(g)-R(g)\right]
+\left(C_\ell+\frac{C_\ell}{C_r}\right)\sqrt{\frac{\ln(2/\delta)}{2n}}.
\end{align*}
Since $\hR(g)$ is unbiased, it is routine to show that \cite{mohri12FML}
\begin{align*}
\bE_\cX\left[\sup\nolimits_{g\in\cG}\hR(g)-R(g)\right]
&\le 2\fR_n\left(\left(1+\frac{1-r}{r}\right)\circ\ell\circ\cG\right)\\
&\le 2\left(1+\frac{1}{C_r}\right)\fR_n(\ell\circ\cG)\\
&\le 2\left(L_\ell+\frac{L_\ell}{C_r}\right)\fR_n(\cG),
\end{align*}
which proves this direction.

The other direction $\sup_{g\in\cG}R(g)-\hR(g)$ can be proven similarly. \qed

%-------------------------------------------------------------------------
\subsection{Proof of Theorem~\ref{thm:est-err}}

Based on Lemma~\ref{thm:uni-dev}, the estimation error bound \eqref{eq:est-err-bnd} is proven through
\begin{align*}
R(\hg)-R(g^*)
&= \left(\hR(\hg)-\hR(g^*)\right)
+\left(R(\hg)-\hR(\hg)\right)
+\left(\hR(g^*)-R(g^*)\right)\\
&\le 0 +2\sup\nolimits_{g\in\cG}|\hR(g)-R(g)|\\
&\le 4\left(L_\ell+\frac{L_\ell}{C_r}\right)\fR_n(\cG)
+2\left(C_\ell+\frac{C_\ell}{C_r}\right)\sqrt{\frac{\ln(2/\delta)}{2n}},
\end{align*}
where $\hR(\hg)\le\hR(g^*)$ by the definition of $\hR$. \qed

\section{Neural Network Architectures used in Section \ref{sec:benchmark-experiments}}
\subsection{CNN architecture}\label{sec:CNN_architecture}
\begin{itemize}
\item Convolution (3 in- /18 out-channels, kernel size 5).
\item Max-pooling (kernel size 2, stride 2).
\item Convolution (18 in- /48 out-channels, kernel size 5).
\item Max-pooling (kernel size 2, stride 2).
\item Fully-connected (800 units) with ReLU.
\item Fully-connected (400 units) with ReLU.
\item Fully-connected (1 unit).
\end{itemize}
\subsection{AutoEncoder Architecture}\label{sec:AE_architecture}
\begin{itemize}
  \item Convolution (3 in- /18 out-channels, kernel size 5, stride 1) with ReLU.
  \item Max-pooling (kernel size 2, stride 2).
  \item Convolutional layer (18 in- /48 out-channels, kernel size 5, stride 1) with ReLU.
  \item Max-pooling (kernel size 2, stride 2).
  \item Deconvolution (48 in- /18 out-channels, kernel size 5, stride 2) with ReLU.
  \item Deconvolution (18 in- /5 out-channels, kernel size 5, stride 2).
  \item Deconvolution (5 in- /3 out-channels, kernel size 4, stride 1) with Tanh.
\end{itemize}

\end{document}